\documentclass[final,5p,times,twocolumn,authoryear]{elsarticle}
\usepackage{amsmath,amssymb,amsthm}
\usepackage{booktabs,multirow}
\usepackage{graphicx}
\usepackage{xcolor}
\usepackage[hidelinks]{hyperref}

\usepackage[section]{placeins}     

\newtheorem{proposition}{Proposition}
\newcommand{\vacf}{VACF}
\makeatletter
\def\ps@pprintTitle{\let\@oddhead\@empty\let\@evenhead\@empty\let\@oddfoot\@empty\let\@evenfoot\@empty}
\makeatother

\begin{document}
\onecolumn
\thispagestyle{empty}
\begin{center}
{\LARGE\bfseries Where Should the Gaussians Go? Reusable Label-Aware Allocation for Budget-Constrained Gaussian Splatting of Cinematic CT\par}
\vspace{10pt}
{\large Bowen Xiang, Dingjie Su, Jon S. Heiselman, Michael I. Miga\par}
\end{center}
\vspace{18pt}
\noindent{\Large\bfseries Graphical abstract}\par
\vspace{10pt}
\noindent\includegraphics[width=\textwidth]{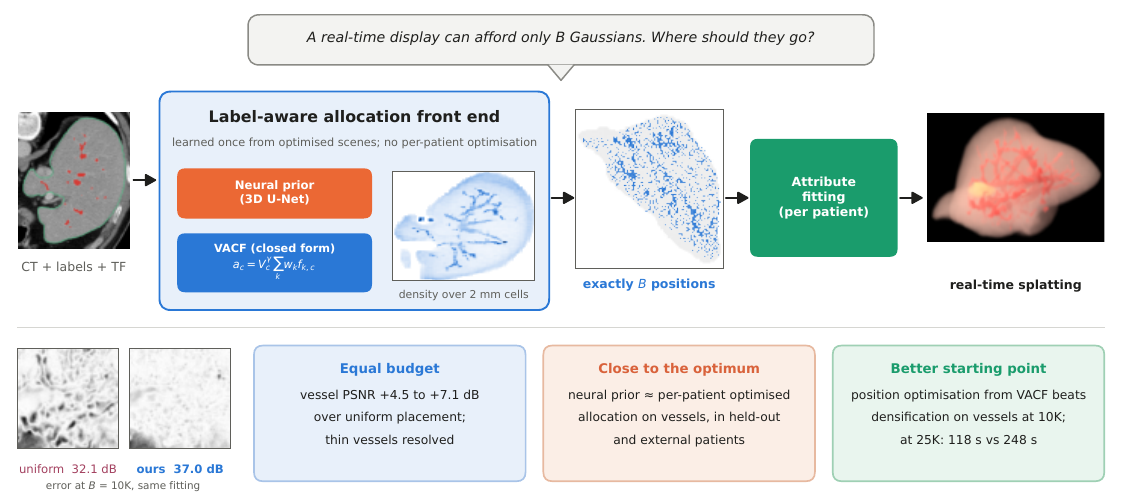}\par
\vspace{22pt}
\noindent{\Large\bfseries Highlights}\par
\vspace{6pt}
\begin{itemize}\setlength{\itemsep}{3pt}
\item Where Gaussians go under a fixed budget is learned once instead of per patient
\item A label-aware allocation front end with neural and closed-form implementations
\item On vessels, the neural prior matches per-patient optimised allocation, externally too
\item As initialisation, allocation beats per-patient densification on vessels at 10K
\item Rendering presets show when label-based and appearance-based allocation fails
\end{itemize}
\clearpage
\twocolumn
\setcounter{page}{1}
\begin{frontmatter}
\title{Where Should the Gaussians Go? Reusable Label-Aware Allocation for Budget-Constrained Gaussian Splatting of Cinematic CT}

\author[1]{Bowen Xiang\corref{cor1}}
\ead{bowen.xiang@vanderbilt.edu}
\author[2]{Dingjie Su}
\author[1]{Jon S. Heiselman}
\author[1]{Michael I. Miga}
\cortext[cor1]{Corresponding author.}
\address[1]{Department of Biomedical Engineering, Vanderbilt University, Nashville, TN, USA}
\address[2]{Department of Computer Science, Vanderbilt University, Nashville, TN, USA}

\begin{abstract}
Anatomical labels help 3D Gaussian splatting preserve vessels in cinematic renderings of CT, but the representation is still found by optimising each patient's scene, where densification decides where Gaussians go, while real-time displays cap their number. We address one part of this process: deciding where a fixed budget of $B$ Gaussians should be placed. We propose a reusable, label-aware allocation front end that maps a patient's CT, labels and transfer function to a density over 2\,mm cells and draws exactly $B$ positions from it without per-patient optimisation; the other Gaussian attributes are still fitted per patient. The front end is learned once from per-patient optimised scenes and implemented as a population-trained neural prior and an interpretable seven-parameter visibility-aware closed form (\vacf). With identical attribute fitting on 20 held-out patients at 10K--100K Gaussians, both raise vessel-masked PSNR over uniform allocation by 4.5--7.1\,dB and over pruning a dense voxel-aligned model by 1.6--3.3\,dB. The neural prior does not differ significantly from per-patient optimised allocation at any budget, and \vacf\ is within 0.6\,dB of it on vessels; on 19 external patients the neural prior again matches it on vessels and \vacf\ is within 0.7\,dB. As an initialisation for position optimisation, \vacf\ allocation beats per-patient densification on vessels at 10K Gaussians and, at 25K, reaches a higher vessel PSNR after 118\,s of optimisation than densification after 248\,s. Under altered rendering presets, \vacf\ transfers to new colours and lighting but not to a label-free transfer function, whereas the neural prior shows the opposite pattern.
\end{abstract}

\begin{keyword}
Gaussian splatting \sep cinematic rendering \sep anatomical labels \sep volume visualisation \sep liver vessels \sep mixed reality
\end{keyword}
\end{frontmatter}

\section{Introduction}\label{sec:intro}

Cinematic rendering simulates the transport of light through a CT volume classified by a transfer function (TF). It conveys depth, shape and the spatial relations of vessels and lesions better than conventional direct volume rendering and is increasingly used for surgical planning, patient communication and education \citep{eid2017cinematic,dappa2016cinematic}, but path tracing takes far longer than interactive frame rates allow on mobile and head-mounted hardware. 3D Gaussian splatting (3DGS) \citep{kerbl2023gaussian} removes this barrier at display time: a set of anisotropic Gaussians fitted to path-traced images rasterises at hundreds of frames per second. Label-aware Gaussian splatting (LAGS) \citep{lags} showed that anatomical labels are valuable in this process: using them during optimisation preserves fine structures such as vessels, and the resulting scenes can be explored on a standalone headset.

The Gaussian representation itself, however, is still found by optimising each patient's scene. Starting from an initial set of Gaussians, densification and pruning decide, over tens of thousands of iterations on path-traced views, where Gaussians are needed and how many there are, and this search is repeated for every patient and every TF \citep{lags,clipgs}. The resulting count is a by-product of thresholds, whereas the devices that motivate the approach impose a fixed budget $B$ on the number of Gaussians that can be stored, streamed, sorted and blended per frame. Feed-forward models such as Render-FM \citep{renderfm} avoid per-patient optimisation but place one Gaussian per foreground voxel, so their count follows the size of the organ (50K--800K Gaussians in Render-FM). Neither approach answers where a fixed number of Gaussians should go.

We address this one part of the process. Our premise is that where a scene needs its Gaussians is largely predictable from the anatomy and the rendering setup, so that it can be learned once from optimised scenes of other patients and reused. We propose a \emph{label-aware allocation front end}: from a patient's CT, labels and TF it computes a density over 2\,mm cells and draws exactly $B$ Gaussian positions from it, in seconds and without per-patient optimisation (Fig.~\ref{fig:problem}). The front end takes over the placement decisions that densification makes; it does not estimate colour, opacity or shape. Every complete scene reported in this paper still has these attributes fitted to path-traced views of the patient (about 2\,min on a GPU), optionally together with the positions; we do not generate complete 3DGS scenes in one forward pass.

\begin{figure*}[!tp]\centering
\includegraphics[width=\textwidth]{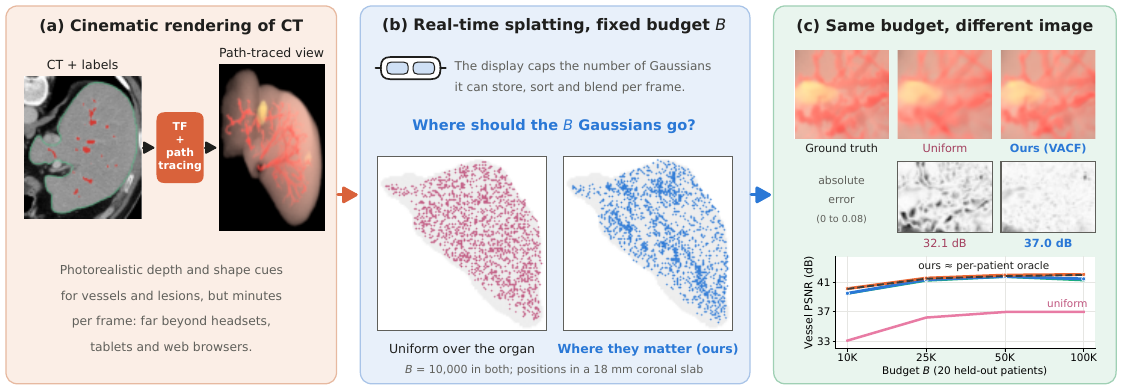}
\caption{The problem. (a) Cinematic rendering path-traces a CT volume classified by a transfer function (TF); it conveys depth and shape but takes minutes per frame. (b) 3D Gaussian splatting renders such scenes in real time, but head-mounted displays, tablets and browsers cap the number of Gaussians $B$, so their positions must be chosen. Shown are $B=10{,}000$ positions (in an 18\,mm coronal slab) drawn uniformly over the organ and by our visibility-aware closed form (\vacf), which concentrates them on vessels and the visible surface. (c) With identical fitting of all other attributes, the two allocations give different images: crops of a held-out view, absolute error and crop PSNR (MSD test patient 372); and vessel-masked PSNR against the budget on 20 held-out patients, where the neural prior matches the per-patient oracle, \vacf\ comes close, and uniform allocation stays 4.5--7.1\,dB lower.}\label{fig:problem}
\end{figure*}

We implement the front end in two ways that share their supervision and objective and differ in model class. Both are trained to reproduce \emph{oracle densities}, the per-cell counts of Gaussians in per-patient optimised scenes of training patients. A \emph{population-trained neural prior}, a 3D U-Net on ten cell-level inputs, can represent any dependence of these densities on the inputs. A \emph{visibility-aware closed form} (\vacf) has seven parameters: six weights for the label fractions and TF gradient of a cell, and an exponent on the visibility of the cell from the viewing sphere. It is inspectable and its failure modes can be stated in advance. We evaluate both under a protocol in which all methods share the same attribute fitting and differ only in where the $B$ Gaussians are, and let the results establish their respective strengths.

Our contributions are:
\begin{enumerate}
\item \textbf{A reusable, label-aware allocation front end} for budget-constrained Gaussian splatting of cinematic CT, which separates the placement of Gaussians from the fitting of their attributes, together with an equal-budget evaluation protocol that isolates the effect of placement.
\item \textbf{Two implementations learned once from per-patient optimised scenes}: a population-trained neural prior and an interpretable visibility-aware closed form.
\item \textbf{Evidence on held-out and external patients.} At equal budgets, both raise anatomical rendering quality far above uniform placement and above pruning a dense voxel-aligned model; the neural prior does not differ significantly from the per-patient optimised allocation, on 20 held-out patients and on 19 external patients from another centre. As an initialisation, the closed-form allocation lets position optimisation surpass per-patient densification on vessels at 10K Gaussians, and at 25K reach a higher vessel PSNR in less than half the optimisation time. Rendering presets show the limits of each implementation: the closed form fails when appearance no longer follows the labels, and the neural prior fails when the TF changes the contrast it was trained on.
\item \textbf{Supporting analysis.} Per-patient optimised allocations at 25K--200K Gaussians differ about as much as two optimisation seeds do, as a simple power-law model predicts, which supports drawing different budgets from one density per patient; and the main systematic difference between optimised allocations and anatomy-only rules is the visibility of each cell.
\end{enumerate}

\paragraph{Scope of comparisons.} All baselines are our own implementations under the common protocol. The dense voxel-aligned baseline adopts Render-FM's placement (one Gaussian per foreground cell) but is fitted per patient; it is not the Render-FM network. We do not evaluate the official LAGS pipeline: our per-patient models use generic absolute-gradient densification \citep{absgs}, and label-aware training as in LAGS is complementary to, and could start from, our allocation.

\section{Related work}\label{sec:related}

\paragraph{Gaussian splatting for medical volumes.} LAGS \citep{lags} introduced label-aware densification and rasterisation for path-traced medical volumes with a mixed-reality viewer; its budget study (50K--200K Gaussians; Table~IV of \citealp{lags}) reported gains of 2.7--9.2\,dB over vanilla 3DGS at equal counts, attributed to a better distribution of Gaussians, which motivates the present work. ClipGS and XClipGS add interactive clipping \citep{clipgs,xclipgs}; DDGS-CT targets X-ray rendering from CT \citep{ddgsct}. All of these optimise each scene. Render-FM \citep{renderfm} is a feed-forward model trained on 991 TotalSegmentator CTs that predicts one 6D Gaussian per foreground voxel from the CT, masks and TF, without per-patient optimisation but without control of the count. Expressive kernels (6DGS, 7DGS, Universal Beta Splatting) reduce counts as a side effect of the kernel \citep{6dgs,7dgs,ubs}; allocation is complementary to the kernel.

\paragraph{Compact Gaussians and budget control.} For natural scenes, compactness is pursued by pruning with importance scores such as LightGaussian's global significance \citep{lightgaussian}, by growth under a budget as in Taming 3DGS \citep{taming}, by Markov-chain relocation \citep{mcmc}, by densification on absolute gradients \citep{absgs} and by codebook compression \citep{compact3dgs}. All operate per scene by optimisation. We use significance pruning of a dense voxel-aligned model as a baseline, and budget-capped growth to build per-patient references.

\paragraph{Feed-forward Gaussian prediction.} Generalisable models predict Gaussians from images or volumes in one pass \citep{pixelsplat,renderfm}; they typically tie Gaussians to pixels or voxels. Our front end decides positions under a budget and could precede such a decoder; in this work, attributes are fitted per patient.

\paragraph{Optimal point density and visibility.} The allocation model of Section~\ref{sec:model} is a finite-cell form of asymptotic quantisation theory, in which the optimal density of quantisation points is a power of the source density \citep{zador,gersho}. Importance-driven and visibility-driven volume rendering \citep{viola2005importance,correa2011visibility} use related notions of how visible a structure is, but to classify or emphasise structures, not to represent a volume with a fixed number of primitives.

\section{Method}\label{sec:method}

Fig.~\ref{fig:workflow} summarises the method: oracle allocations from per-patient optimisation on training patients (Section~\ref{sec:teachers}), two allocators learned from them (Sections~\ref{sec:learned}--\ref{sec:relation}), and their deployment on a new patient (Section~\ref{sec:protocol}).

\begin{figure*}[!tp]\centering
\includegraphics[width=\textwidth]{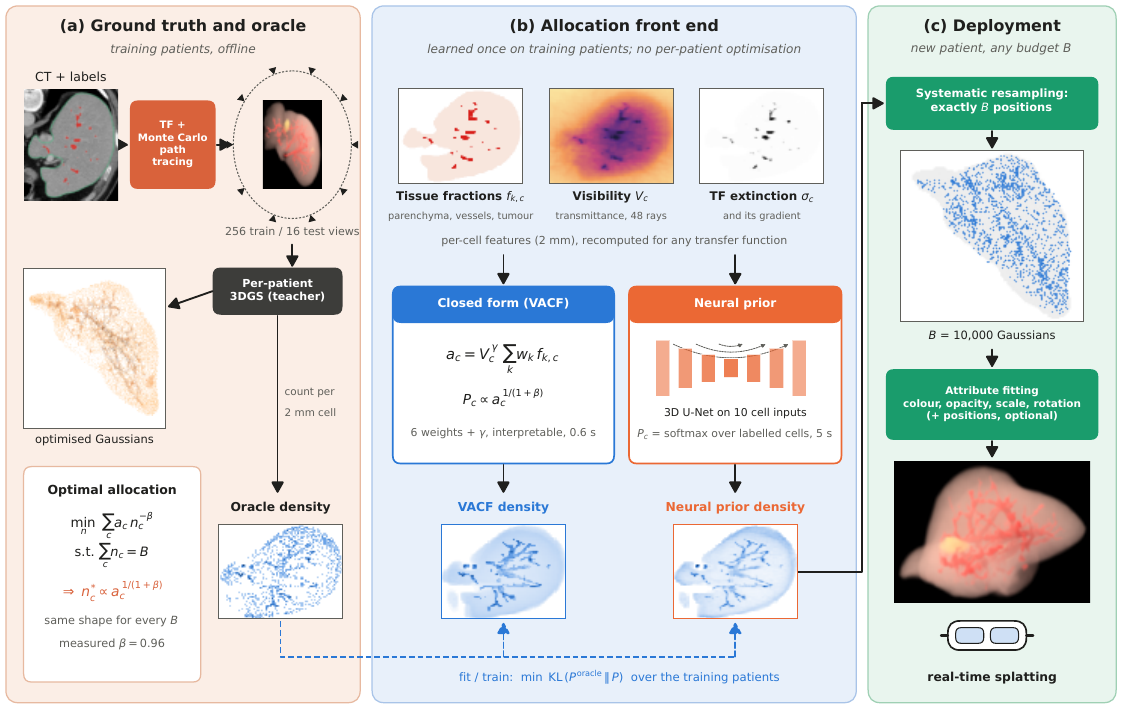}
\caption{Method. (a) For each training patient, CT and labels are classified by a TF and path-traced from 256 training views on a sphere; a per-patient 3DGS model (teacher) optimised on these views defines the oracle allocation, the count of its Gaussians per 2\,mm cell. A simple distortion model (Proposition~\ref{prop:alloc}) suggests that one density per patient can serve a range of budgets. (b) The allocation front end predicts this density from per-cell features without per-patient optimisation, implemented as a visibility-aware closed form (\vacf) with six tissue weights and a visibility exponent, or as a neural prior (3D U-Net). Both are fitted to the oracle densities of the training patients by minimising the KL divergence (dashed) and then frozen. (c) For a new patient and any budget $B$, exactly $B$ positions are drawn from the density; colour, opacity, scale and rotation, and optionally the positions, are then fitted to path-traced views of that patient, and the result is rendered by splatting in real time. All images are data of MSD test patient 372: CT slice with labels (vessels red, tumour yellow, liver outline green), teacher Gaussians (coronal projection, darker = more opaque), cell features (one axial 2\,mm slice), densities (22\,mm axial slab, square-root scale), sampled positions (18\,mm coronal slab) and a rendering after \vacf\ allocation with optimised positions at 10K Gaussians.}\label{fig:workflow}
\end{figure*}

\subsection{Setting}\label{sec:setting}
The region of interest (the liver with a 6\,mm margin) is partitioned into cells $c\in\mathcal C$, 2\,mm cubes unless stated otherwise. An \emph{allocator} maps a patient's CT, labels and TF to a density $P_c$ over cells; exactly $B$ Gaussian positions are then drawn from $P$. What is computed once and what per patient is as follows. Learned once on training patients: the allocator's parameters. Per patient, without optimisation: cell features (seconds) and the allocation (under a second for \vacf, 5\,s for the neural prior). Per patient, by optimisation: the remaining Gaussian attributes, fitted to path-traced views of that patient, as for any 3DGS scene.

\subsection{Oracle allocations from per-patient optimisation}\label{sec:teachers}
For each training patient a \emph{teacher} 3DGS model is optimised on 256 path-traced training views (Section~\ref{sec:gt}) with gsplat \citep{gsplat}: 20K initial Gaussians uniformly in the foreground, 30K iterations, spherical harmonics of degree 3, loss $0.8\,L_1+0.2\,(1-\mathrm{SSIM})$, and densification by the absolute screen-space gradient \citep{absgs} with threshold $4\times10^{-4}$. The \emph{oracle} density of a patient is the normalised count of its teacher's Gaussians (opacity $>0.05$) per cell: where optimisation chose to put Gaussians. Teachers are defined for every patient in the study, so that the oracle also serves as the per-patient reference on test patients. We additionally train \emph{budget-capped} teachers that reach exactly $B$ Gaussians (at each refinement step the top-$k$ Gaussians by accumulated gradient are densified, with $k$ following a linear schedule from the initial count to $B$); they serve as the reference ``per-case 3DGS at $B$'', which optimises positions together with all other attributes, and for the analysis of Section~\ref{sec:res_theory}.

\subsection{An allocation model}\label{sec:model}
A simple model motivates two design choices, one density per patient for a range of budgets and a visibility factor. Suppose the rendering distortion after attribute fitting is additive over cells with a local power law,
\begin{equation}\label{eq:distortion}
D(\mathbf n)\approx\sum_{c} a_c\,n_c^{-\beta},\qquad \beta>0,
\end{equation}
where $n_c$ is the number of Gaussians in cell $c$ and $a_c\ge0$ the complexity of the cell, which depends on the anatomy, the TF, the lighting and the visibility of the cell.

\begin{proposition}\label{prop:alloc}
In the continuous relaxation, the minimiser of \eqref{eq:distortion} subject to $\sum_c n_c=B$ is $n_c^{*}=B\,a_c^{1/(1+\beta)}\big/\sum_{c'}a_{c'}^{1/(1+\beta)}$. In particular, the normalised allocation $n_c^*/B$ does not depend on $B$, and uniform allocation is optimal only if all $a_c$ are equal.
\end{proposition}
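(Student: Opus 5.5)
The plan is a standard convex-optimisation argument on the continuous relaxation $n_c>0$, $\sum_c n_c=B$. First I will dispose of degenerate cells. If $a_c=0$, that cell does not contribute to $D$, and the formula assigns it $n_c^*=0$. I therefore restrict attention to the cells with $a_c>0$; the objective is unchanged when these cells are dropped. On this set, each term $a_c n_c^{-\beta}$ is strictly convex on $(0,\infty)$, since its second derivative is $a_c\beta(\beta+1)n_c^{-\beta-2}>0$. Each term also tends to $+\infty$ as $n_c\to0^+$. The feasible set is the relatively open simplex, which is convex, so $D$ has at most one minimiser there. Any point satisfying the first-order KKT conditions is that minimiser.

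Next I will form the Lagrangian $\sum_c a_c n_c^{-\beta}+\lambda(\sum_c n_c-B)$ and set its partial derivatives to zero. This gives $\beta a_c n_c^{-\beta-1}=\lambda$, hence $n_c=(\beta a_c/\lambda)^{1/(1+\beta)}\propto a_c^{1/(1+\beta)}$. The proportionality constant follows from the budget constraint, which yields exactly $n_c^*=B\,a_c^{1/(1+\beta)}/\sum_{c'}a_{c'}^{1/(1+\beta)}$.

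To confirm that this stationary point is a global minimiser rather than merely a critical point, I will use strict convexity. Alternatively, I can give a direct check via Hölder's inequality, which sharpens the statement. Write $\sum_c a_c^{1/(1+\beta)}=\sum_c (a_c n_c^{-\beta})^{1/(1+\beta)} n_c^{\beta/(1+\beta)}$ and apply Hölder with exponents $1+\beta$ and $(1+\beta)/\beta$. This gives $D(\mathbf n)\ge \big(\sum_c a_c^{1/(1+\beta)}\big)^{1+\beta}B^{-\beta}$, with equality iff $a_c n_c^{-\beta}\propto n_c$. That equality condition is the stated allocation.

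The two corollaries then follow directly. The ratio $n_c^*/B$ contains no $B$. Uniform allocation $n_c=B/|\mathcal C|$ coincides with $n^*$ iff $a_c^{1/(1+\beta)}$ is constant over cells, that is, iff all $a_c$ are equal. By uniqueness, whenever the $a_c$ differ, uniform allocation gives strictly larger distortion. The only delicate point, and the one I expect to need care, is the boundary: cells with $a_c=0$, and the fact that the minimiser is attained in the interior. The blow-up of $n_c^{-\beta}$ at $0$ handles interiority for $a_c>0$. If zero-complexity cells are allowed, "uniform is optimal only if all $a_c$ are equal" should be read over the whole cell set, where a zero cell receiving $B/|\mathcal C|$ wastes budget.
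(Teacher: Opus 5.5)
Your proposal is correct and follows the paper's proof. Like the paper, you set the Lagrangian's derivatives to zero to get $\beta a_c n_c^{-\beta-1}=\lambda$, hence $n_c\propto a_c^{1/(1+\beta)}$, use strict convexity of each term for global optimality, and read off the uniform case from the same condition. Your Hölder bound is an optional extra that also gives the optimal value $D^{*}=\bigl(\sum_c a_c^{1/(1+\beta)}\bigr)^{1+\beta}B^{-\beta}$, and your handling of cells with $a_c=0$ makes explicit an assumption ($a_c>0$) that the paper's strict-convexity step uses tacitly.
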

\begin{proof}
The Lagrangian $\sum_c a_c n_c^{-\beta}+\lambda(\sum_c n_c-B)$ is stationary when $\beta a_c n_c^{-\beta-1}=\lambda$, i.e.\ $n_c\propto a_c^{1/(1+\beta)}$; each term is strictly convex in $n_c>0$, so this is the global minimum. Optimality of uniform allocation requires equal $a_c$ by the same condition.
\end{proof}

\emph{Visibility.} A Gaussian that misrepresents the radiance of a cell produces a pixel error that reaches the camera attenuated by the transmittance $T$ between the cell and the camera; for a squared-error metric its contribution scales with $T^2$. Averaged over a viewing sphere, the complexity of a cell should therefore grow with its visibility, and we model it as an intrinsic term $\tilde a_c$ times a power of the mean transmittance,
\begin{equation}\label{eq:vis}
a_c\approx V_c^{\gamma}\,\tilde a_c,\qquad V_c=\frac{1}{|\Omega|}\sum_{u\in\Omega}\exp\!\Big(-\!\int_0^{\infty}\!\sigma(x_c+tu)\,dt\Big),
\end{equation}
where $\sigma$ is the TF extinction and $\Omega$ a set of 48 directions. Because $0\le T\le1$, the mean of $T^2$ lies between $V_c^2$ and $V_c$, so the squared-error argument alone would suggest $1\le\gamma\le2$. We treat $\gamma$ as an empirical parameter and fit it.

The model ignores occlusion between neighbouring Gaussians and assumes a common exponent for all cells. We therefore use it only as motivation and test what matters empirically: whether optimised allocations are budget-invariant (Section~\ref{sec:res_theory}) and whether the exponent and the visibility term affect image quality (Section~\ref{sec:res_ablation}).

\subsection{Neural allocation prior}\label{sec:learned}
The neural prior is a 3D U-Net (MONAI \citep{monai}; channels 24--192, three downsamplings, residual units) whose output logits, normalised by a softmax over the cells that carry a tissue label, are the allocation density. Its ten cell-level inputs are CT intensity; the fractions of parenchyma, tumour and vessels in three calibre classes ($r<1.5$, $1.5$--$3$, $\ge3$\,mm; Section~\ref{sec:metrics}); the TF extinction and its gradient magnitude; the visibility $V_c$ of \eqref{eq:vis}, computed by marching 48 rays per cell through the TF extinction on the GPU; and depth below the liver surface. It is trained on the 56 training patients to minimise
\begin{equation}\label{eq:kl}
\sum_{\text{patients}}{\rm KL}\big(\hat P^{\rm oracle}\,\big\|\,P\big),
\end{equation}
the negative log-likelihood of the teachers' Gaussian positions up to a constant, where $\hat P^{\rm oracle}$ is the raw, unsmoothed oracle density (Appendix~\ref{app:notes}). Training uses random $48^3$-cell crops for 300 epochs (AdamW, learning rate $3\times10^{-4}$, cosine schedule), with model selection by the whole-volume likelihood on 10 validation patients. Input ablations retrain the network without groups of channels; the variant ``without TF inputs'' drops the extinction and its gradient but keeps visibility, which is computed from the extinction.

\subsection{Visibility-aware closed form (\vacf)}\label{sec:vacf}
\vacf\ restricts the allocator to the form suggested by Section~\ref{sec:model}, with the label fractions $f_{k,c}$ (parenchyma, three vessel calibres, tumour) and the TF-gradient magnitude $g_c$ as intrinsic complexity:
\begin{equation}\label{eq:vacf}
P_c\propto\Big(V_c^{\gamma}\big(\textstyle\sum_{k}w_kf_{k,c}+w_gg_c\big)\Big)^{1/(1+\beta)},
\end{equation}
with weights on the simplex, $\gamma\ge0$, and $\beta$ fixed from a power-law fit (Section~\ref{sec:res_theory}). The six weights and $\gamma$ are fitted once on the training patients with the objective \eqref{eq:kl}, by Nelder--Mead in a softmax parameterisation of the simplex. Because the visibility is recomputed from the TF extinction, \vacf\ adapts to a new TF without refitting; its tissue terms, however, assume that what is rendered follows the labels.

\subsection{How the two implementations relate}\label{sec:relation}
Both allocators are trained on the same targets with the same objective and are applied in the same way; they differ in hypothesis class. \vacf\ is a pointwise function of a subset of the neural prior's inputs (label fractions, TF gradient, visibility), a fixed monotone function of a weighted sum modulated by visibility, so the neural prior can in principle represent it. Its six weights and visibility exponent can be read and compared across cell sizes, and the conditions under which it will fail follow from its form. The neural prior can represent interactions and spatial context that the closed form cannot, such as the arrangement of vessels around a cell, at the cost of 2.7 million parameters and of learning whatever its inputs encode about the training TF. Which of these trade-offs matters is an empirical question, answered in Sections~\ref{sec:res_main}--\ref{sec:res_presets}.

\subsection{Deployment and evaluation protocol}\label{sec:protocol}
Exactly $B$ positions are drawn by systematic resampling of the density, which yields integer counts per cell summing to $B$, with stratified jitter inside each cell. Gaussians are initialised at these positions with an isotropic scale from the distance to their three nearest neighbours and colour and opacity from the TF; colour, opacity, scale and rotation are then optimised for 15K iterations on the patient's training views without densification or pruning. For evaluation, positions stay frozen, and the protocol is identical for every allocator, so that differences in image quality are due to placement alone. In deployment, positions may be optimised as well (Section~\ref{sec:res_positions}).

\subsection{Baselines and references}\label{sec:baselines}
\emph{Uniform voxels}: $B$ foreground voxels drawn uniformly. \emph{TF density}: density proportional to the TF extinction. \emph{Anatomy heuristic}: label weights parenchyma/vessel/tumour $=1/4/2$ plus the normalised TF gradient, the strongest hand-designed rule in our pilot study. \emph{Segmentation-free}: Frangi vesselness \citep{frangi1998} and multiscale CT gradient inside the rendered region, fitted like \vacf. \emph{Dense model, top-$B$}: a dense voxel-aligned per-patient model with one Gaussian per foreground 2\,mm cell, as in Render-FM's placement, fitted with frozen positions, from which the top-$B$ Gaussians are kept by opacity or by LightGaussian global significance \citep{lightgaussian} and then refitted under the common protocol. The \emph{oracle density} and \emph{per-case 3DGS at $B$} require per-patient optimisation and are references, not competitors.

\section{Experimental setup}\label{sec:setup}

\subsection{Cinematic ground truth}\label{sec:gt}
CT volumes are resampled to 1\,mm isotropic voxels and cropped to the liver with a 6\,mm margin. A TF maps CT and labels to extinction $\sigma$ and RGB single-scattering albedo (Section~\ref{sec:presets}). Images are rendered with Mitsuba~3 \citep{mitsuba3} (volumetric path tracing in a heterogeneous medium, Henyey--Greenstein phase function with $g=0.2$, environment map plus one directional key light, $512\times512$ pixels, 35$^\circ$ field of view). Cameras lie on a sphere that frames the region: 256 training views at 1024 samples per pixel (spp) and 16 held-out test views at 2048\,spp.

\subsection{Data}\label{sec:data}
\emph{Development and internal test.} We use the hepatic-vessel task of the Medical Segmentation Decathlon (MSD) \citep{msd}: 303 portal-venous contrast-enhanced CT scans with annotations of hepatic vessels and tumours. Because resampling thick slices to 1\,mm only interpolates, we keep scans with slice thickness $\le2.5$\,mm (86 of 303; 194 scans have 5\,mm slices and none lie between 2.5 and 3.75\,mm). Liver masks are obtained with TotalSegmentator \citep{totalseg}. Patients are split into 56 training, 10 validation and 20 test patients.

\emph{External test.} 3D-IRCADb-01 \citep{ircadb} contains 20 contrast-enhanced CTs from another centre with expert segmentations. We exclude one scan with 4\,mm slices (slice thickness of the remaining 19: 1.0--2.5\,mm), map the portal and hepatic venous systems to the MSD vessel label, restricted to the liver dilated by 5\,mm because MSD does not annotate the extrahepatic vena cava, and map all liver tumours to the tumour label; the liver mask is again taken from TotalSegmentator. The annotation protocol differs from MSD (for example, five patients have no tumour and the venous system is not delineated in all patients). IRCADb was used once, for the final evaluation, with all allocators frozen.

\subsection{Rendering presets}\label{sec:presets}
\emph{Preset A}, used for all fitting, training and model selection, is a label-aware TF: parenchyma $\sigma=0.012\,{\rm mm^{-1}}$, vessels $0.45$ and tumour $0.10$ with reddish vessels and tan parenchyma, parenchyma and vessel extinction modulated by CT intensity (40--200\,HU), and class occupancies smoothed by a 0.6-voxel Gaussian to avoid staircases. \emph{Preset B} keeps a label-aware TF but changes everything else: denser bluish parenchyma ($\sigma=0.022$), less dense whitish vessels ($0.30$), green tumour ($0.16$), a shifted intensity window (60--220\,HU), and a cooler key light from the opposite side. \emph{Preset C} is label-free: inside the liver, extinction and albedo are piecewise-linear functions of the CT intensity relative to the patient's median liver intensity (after Gaussian smoothing restricted to the liver), so that enhancing structures more than about 25\,HU above the parenchyma appear as dense red vessels and hypodense tissue appears faint yellow, whether or not they are labelled; lighting is as in preset A. A relative window is needed because contrast enhancement varies widely between patients (median vessel intensity 116--209\,HU on the test set). Presets B and C were rendered for the first 10 test patients and never used for any decision. For each preset the oracle uses teachers trained on that preset's images, and the allocators' inputs (extinction, gradient, visibility) are recomputed under that preset.

\subsection{Model selection and use of the test data}\label{sec:selection}
All design decisions (TF, rendering protocol, teacher settings, the form of \vacf, its fitted parameters and the network) were made on training and validation patients. The MSD test set was used in two rounds. A first set of frozen allocators (an anatomy-only closed form fitted to smoothed teacher targets, among others) was evaluated first; its failure and the oracle gap it revealed motivated the visibility analysis, which was carried out on training patients only, and the final \vacf\ and network were developed on training and validation patients and frozen (with checksums) before the second evaluation. After that evaluation we found that the neural prior's inference normalised its density over every cell with non-zero extinction instead of the labelled cells used by its training loss, which placed about 20\% of its budget in cells without any tissue label, where teachers place about 1\%. We corrected the inference to match training, without retraining or any other change, and report the corrected prior throughout; the uncorrected prior was 0.54\,dB below the oracle on vessels at 10K. All later experiments on the MSD test set (component and input ablations, cell size, positions, perceptual metrics) added methods or metrics but did not change the frozen allocators. Two kinds of test data were never seen by any decision: the external IRCADb patients and presets B and C.

\subsection{Metrics and statistics}\label{sec:metrics}
For each test view we compute PSNR and SSIM and, from projected label maps obtained by ray-marching the label occupancies with the same extinction, region masks: foreground (accumulated opacity $>0.05$), vessels (projected vessel weight $>0.3$), vessel-calibre strata (visible vessel pixels dominated by one radius class), parenchyma and tumour. The radius of a vessel voxel is the largest distance to the vessel wall within its $7\times7\times7$-voxel ($\pm3$\,mm) neighbourhood, which reaches the centreline radius for vessels up to about 4\,mm in radius and exceeds 3\,mm for all larger ones, so every voxel of a vessel receives its correct class; non-vessel voxels take the class of the nearest vessel voxel. A thin branch within 3\,mm of a larger vessel therefore inherits the larger class, so the thin stratum ($r<1.5$\,mm) contains only thin branches away from larger vessels: it can miss thin segments near bifurcations but does not include parts of larger vessels. Region PSNRs pool squared error over the 16 test views of a patient. Because the clinical purpose of the rendering is to show the anatomy, we report vessel-masked and thin-vessel ($r<1.5$\,mm) PSNR alongside foreground PSNR. We also report LPIPS \citep{lpips} (AlexNet features, full images) and a vessel-structure metric, centreline ridge recall (CR80): reference centrelines are the skeleton of the ground-truth ridge map (scale-normalised Hessian ridge strength at 1--2.5 pixels, maximum over colour channels) inside the projected vessel mask; CR80 is the fraction of centreline pixels at which the rendering's ridge strength reaches 80\% of the ground truth's, overall and for thin vessels. It measures whether a branch is drawn with its contrast, which PSNR does not separate from small errors elsewhere. Comparisons are paired over patients (never over views): two-sided Wilcoxon signed-rank tests with Holm correction \citep{holm1979} within each comparison family (one pair of methods across metrics and budgets), and 95\% percentile-bootstrap confidence intervals of the mean difference (10,000 resamples). Budgets are $B\in\{10\text{K},25\text{K},50\text{K},100\text{K}\}$ on the MSD test set and $\{10\text{K},25\text{K},50\text{K}\}$ elsewhere. With 10 patients (presets), the smallest attainable Holm-adjusted $p$ is about 0.02. Not finding a significant difference does not establish equivalence; we therefore report confidence intervals alongside every comparison. All paired comparisons at every budget are listed in the Supplementary Material.

\subsection{Implementation}\label{sec:impl}
Rendering used Mitsuba 3.9 (\texttt{cuda\_ad\_rgb}); 3DGS used gsplat 1.6 with PyTorch 2.14. Experiments ran on an RTX 5070\,Ti laptop GPU and RTX 4090 GPUs. Median times on an RTX 4090: cell features including visibility about 7\,s per patient, allocation with \vacf\ 0.6\,s and with the neural prior 4.9\,s, and attribute fitting with frozen positions 2.0\,min; a teacher takes about 8\,min. All of these come in addition to path tracing the training views, which every method in this paper requires. Repeating a fit with the same seed changed vessel-masked PSNR by 0.03\,dB on average (maximum 0.36\,dB over 320 repeated fits), which bounds the run-to-run noise of the protocol.

\section{Results}\label{sec:results}

\subsection{Anatomical rendering quality at equal budgets}\label{sec:res_main}
Tables~\ref{tab:main}--\ref{tab:paired} and Fig.~\ref{fig:budget}a--c summarise the 20 held-out MSD patients.

\begin{table*}[!tp]
\caption{Held-out MSD test set ($n=20$ patients), preset A. PSNR (dB) in projected vessel pixels, thin-vessel pixels ($r<1.5$\,mm) and the foreground, for budgets of 10K--100K Gaussians. All allocators share the same attribute-fitting protocol (positions frozen). Bold: best value excluding the references. $^\dagger$References: allocation or scene from per-patient 3DGS optimisation.}\label{tab:main}
\centering\footnotesize\setlength{\tabcolsep}{3.6pt}
\begin{tabular}{l cccc cccc cccc}
\toprule
& \multicolumn{4}{c}{Vessels} & \multicolumn{4}{c}{Thin vessels ($r<1.5$\,mm)} & \multicolumn{4}{c}{Foreground} \\
\cmidrule(lr){2-5}\cmidrule(lr){6-9}\cmidrule(lr){10-13}
Allocator & 10K & 25K & 50K & 100K & 10K & 25K & 50K & 100K & 10K & 25K & 50K & 100K \\
\midrule
Uniform voxels & 33.07 & 36.21 & 36.98 & 36.99 & 31.78 & 33.90 & 34.63 & 34.86 & 35.90 & 37.76 & 38.16 & 38.10 \\
TF density & 39.54 & 41.24 & 41.64 & 41.22 & 37.00 & 39.97 & 41.23 & 40.69 & 37.58 & 39.67 & 40.51 & 40.36 \\
Segmentation-free & 34.86 & 38.36 & 39.81 & 39.69 & 33.00 & 36.46 & 37.99 & 37.99 & 37.12 & 39.28 & 39.81 & 39.56 \\
Anatomy heuristic & 39.47 & 41.24 & 41.77 & 41.29 & 37.10 & 40.03 & 41.35 & 40.75 & 38.22 & 40.18 & 40.71 & 40.43 \\
Dense model, top-$B$ opacity & 24.32 & 32.86 & 36.58 & 35.66 & 24.14 & 31.35 & 34.56 & 33.83 & 24.94 & 31.72 & 37.26 & 37.21 \\
Dense model, top-$B$ significance & 36.86 & 39.59 & 40.21 & 38.98 & 33.65 & 36.49 & 37.56 & 36.61 & 38.46 & 39.93 & 40.05 & 39.25 \\
\midrule
\vacf\ (ours) & 39.53 & 41.39 & 41.83 & 41.50 & 39.09 & 41.51 & 42.00 & 41.25 & 39.41 & 40.68 & 40.92 & 40.61 \\
Neural prior (ours) & \textbf{40.16} & \textbf{41.60} & \textbf{42.01} & \textbf{42.10} & \textbf{39.64} & \textbf{41.84} & \textbf{42.38} & \textbf{42.42} & \textbf{40.13} & \textbf{40.93} & \textbf{41.06} & \textbf{41.02} \\
\midrule
Oracle density$^\dagger$ & 40.13 & 41.52 & 41.89 & 42.03 & 39.41 & 41.66 & 42.57 & 42.90 & 40.14 & 40.88 & 41.10 & 41.16 \\
Per-case 3DGS$^\dagger$ & 40.33 & 41.80 & 42.22 & 42.42 & 41.47 & 42.86 & 43.21 & 43.11 & 41.05 & 41.58 & 41.51 & 41.36 \\
\bottomrule
\end{tabular}
\end{table*}

\begin{figure*}[!tp]\centering
\includegraphics[width=\textwidth]{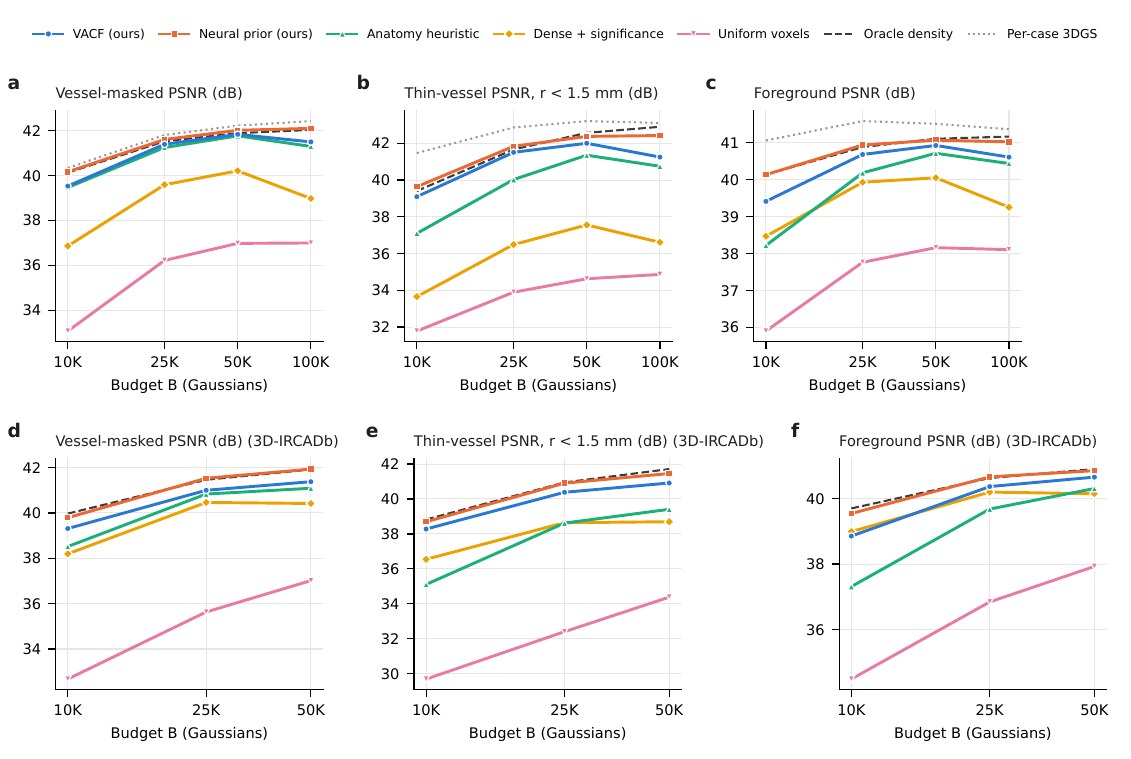}
\caption{Image quality as a function of the budget $B$ (log scale) after fitting all attributes at the allocated positions with the same protocol for every method. Top (a--c): 20 held-out MSD patients; bottom (d--f): 19 external 3D-IRCADb patients. (a, d) Vessel-masked PSNR, (b, e) thin-vessel PSNR ($r<1.5$\,mm), (c, f) foreground PSNR. The neural prior tracks the per-patient oracle density (dashed); \vacf\ is close below it; both stay well above uniform voxel allocation and pruning of a dense voxel-aligned model by LightGaussian significance, and their advantage over the anatomy heuristic is largest on thin vessels and at small budgets. Dotted (a--c): per-case 3DGS at budget $B$, which also optimises positions (Section~\ref{sec:res_positions}). Means over patients; paired statistics in Tables~\ref{tab:paired} and \ref{tab:external}.}\label{fig:budget}
\end{figure*}

\paragraph{Allocators versus baselines.} Both allocators are far above uniform voxels and pruning a dense model. The neural prior improves vessel-masked PSNR over uniform voxels by 7.1, 5.4, 5.0 and 5.1\,dB at 10K, 25K, 50K and 100K, and \vacf\ by $+6.46$ $[6.01,6.90]$, $+5.18$, $+4.86$ and $+4.50$\,dB. Over the dense model pruned by LightGaussian significance, \vacf\ gains $+2.67$ $[1.94,3.39]$, $+1.80$, $+1.63$ and $+2.52$\,dB on vessels and 4.5--5.4\,dB on thin vessels. Keeping the most opaque Gaussians of the dense model collapses (24.3\,dB on vessels at 10K), because opacity favours dense vessel interiors and leaves the rest of the liver empty; even significance, which uses per-patient optimisation, is a poor allocator, because a Gaussian that was important when all its neighbours existed is not the best choice once most of them are removed. The segmentation-free closed form recovers only part of the gap between uniform voxels and the label-based allocators (34.9\,dB on vessels at 10K).

\paragraph{The neural prior reaches the oracle.} The neural prior is not significantly different from the oracle density in any of twelve comparisons (three metrics, four budgets; mean differences between $-0.48$ and $+0.23$\,dB, all Holm-adjusted $p\ge0.7$). It is above the anatomy heuristic by $+0.70$ $[0.46,0.93]$\,dB on vessels, $+2.54$ $[2.19,2.86]$\,dB on thin vessels and $+1.92$ $[1.78,2.06]$\,dB on the foreground at 10K, and above \vacf\ at every budget (vessels $+0.64$, $+0.21$, $+0.18$ and $+0.60$\,dB; all $p<0.05$). Across three training seeds its mean vessel PSNR at 10K varies by 0.02\,dB (standard deviation).

\begin{table*}[!tp]
\caption{Paired differences on the MSD test set ($n=20$), in dB: mean over patients [95\% bootstrap CI]. $^{*}$Holm-adjusted Wilcoxon signed-rank $p<0.05$ within each comparison (3 metrics $\times$ 4 budgets; 25K and 100K in the Supplementary Material).}\label{tab:paired}
\centering\footnotesize\setlength{\tabcolsep}{3pt}
\begin{tabular}{l cc cc cc}
\toprule
& \multicolumn{2}{c}{Vessels} & \multicolumn{2}{c}{Thin vessels} & \multicolumn{2}{c}{Foreground} \\
\cmidrule(lr){2-3}\cmidrule(lr){4-5}\cmidrule(lr){6-7}
Comparison & 10K & 50K & 10K & 50K & 10K & 50K \\
\midrule
\vacf\ $-$ anatomy heuristic & $+0.06$ {\scriptsize[-0.20, +0.33]} & $+0.07$ {\scriptsize[-0.05, +0.16]} & $+2.00$$^{*}$ {\scriptsize[+1.59, +2.38]} & $+0.65$$^{*}$ {\scriptsize[+0.42, +0.88]} & $+1.19$$^{*}$ {\scriptsize[+1.08, +1.31]} & $+0.21$$^{*}$ {\scriptsize[+0.13, +0.30]} \\
\vacf\ $-$ dense, top-$B$ signif. & $+2.67$$^{*}$ {\scriptsize[+1.94, +3.39]} & $+1.63$$^{*}$ {\scriptsize[+1.12, +2.22]} & $+5.44$$^{*}$ {\scriptsize[+4.28, +6.62]} & $+4.45$$^{*}$ {\scriptsize[+3.38, +5.63]} & $+0.94$$^{*}$ {\scriptsize[+0.56, +1.34]} & $+0.87$$^{*}$ {\scriptsize[+0.63, +1.13]} \\
\vacf\ $-$ oracle density & $-0.60$$^{*}$ {\scriptsize[-0.71, -0.48]} & $-0.05$ {\scriptsize[-0.16, +0.05]} & $-0.31$$^{*}$ {\scriptsize[-0.45, -0.15]} & $-0.57$ {\scriptsize[-1.15, -0.13]} & $-0.73$$^{*}$ {\scriptsize[-0.80, -0.65]} & $-0.18$ {\scriptsize[-0.34, -0.06]} \\
Neural prior $-$ anatomy heuristic & $+0.70$$^{*}$ {\scriptsize[+0.46, +0.93]} & $+0.25$$^{*}$ {\scriptsize[+0.14, +0.37]} & $+2.54$$^{*}$ {\scriptsize[+2.19, +2.86]} & $+1.02$$^{*}$ {\scriptsize[+0.81, +1.26]} & $+1.92$$^{*}$ {\scriptsize[+1.78, +2.06]} & $+0.35$$^{*}$ {\scriptsize[+0.26, +0.45]} \\
Neural prior $-$ \vacf & $+0.64$$^{*}$ {\scriptsize[+0.47, +0.81]} & $+0.18$$^{*}$ {\scriptsize[+0.05, +0.34]} & $+0.54$$^{*}$ {\scriptsize[+0.32, +0.82]} & $+0.37$$^{*}$ {\scriptsize[+0.12, +0.70]} & $+0.72$$^{*}$ {\scriptsize[+0.63, +0.82]} & $+0.14$$^{*}$ {\scriptsize[+0.06, +0.24]} \\
Neural prior $-$ oracle density & $+0.04$ {\scriptsize[-0.09, +0.18]} & $+0.13$ {\scriptsize[+0.02, +0.24]} & $+0.23$ {\scriptsize[-0.03, +0.59]} & $-0.19$ {\scriptsize[-0.48, +0.03]} & $-0.00$ {\scriptsize[-0.07, +0.07]} & $-0.04$ {\scriptsize[-0.16, +0.06]} \\
Per-case 3DGS $-$ oracle density & $+0.21$$^{*}$ {\scriptsize[+0.07, +0.35]} & $+0.34$$^{*}$ {\scriptsize[+0.16, +0.50]} & $+2.06$$^{*}$ {\scriptsize[+1.79, +2.40]} & $+0.64$$^{*}$ {\scriptsize[+0.46, +0.91]} & $+0.92$$^{*}$ {\scriptsize[+0.79, +1.05]} & $+0.41$$^{*}$ {\scriptsize[+0.28, +0.53]} \\
\bottomrule
\end{tabular}
\end{table*}

\paragraph{\vacf.} \vacf\ improves thin-vessel PSNR over the anatomy heuristic by $+2.00$ $[1.59,2.38]$, $+1.48$, $+0.65$ and $+0.50$\,dB and foreground PSNR by $+1.19$, $+0.50$, $+0.21$ and $+0.17$\,dB (all Holm-adjusted $p<0.02$), while vessel-masked PSNR, which is dominated by large vessels, is unchanged ($+0.06$ to $+0.20$\,dB, n.s.). It is within $-0.60$, $-0.13$, $-0.05$ and $-0.54$\,dB of the oracle on vessels and within $-0.31$, $-0.16$, $-0.57$ and $-1.65$\,dB on thin vessels; its largest gaps are on the foreground at 10K ($-0.73$\,dB) and on thin vessels at 100K.

\paragraph{Perceptual and structural metrics.} LPIPS and centreline recall agree with the PSNR results (Table~\ref{tab:perceptual}): uniform voxels are worst, and the neural prior and \vacf\ are closest to the oracle (LPIPS at 10K: 0.021 for uniform voxels, 0.016 for the anatomy heuristic, 0.013 for \vacf, 0.011 for the neural prior and the oracle). At 10K, thin-vessel CR80 is 0.45 for uniform voxels, 0.72 for the pruned dense model, 0.88 for the anatomy heuristic, 0.95 for \vacf\ and 0.96 for both the neural prior and the oracle: with either allocator, 95\% of thin-vessel centreline pixels are drawn with at least 80\% of their true contrast, against 88\% for the heuristic ($+0.07$ $[0.06,0.09]$ for \vacf, $p<10^{-5}$). In individual crops (Fig.~\ref{fig:qual}), uniform allocation blurs small branches and leaves errors along every vessel edge, whereas differences among the anatomy-aware allocators at 10K are subtle (the pruned dense model is the best frozen-position allocator in one of the two crops shown), which is why we rely on patient-level statistics.

\begin{table}[!tbp]
\caption{Perceptual and vessel-structure metrics on the MSD test set ($n=20$). LPIPS (AlexNet; lower is better) over the full image, and centreline ridge recall CR80: the fraction of ground-truth vessel centreline pixels whose ridge contrast in the rendering reaches 80\% of that in the ground truth, for thin and all vessels.}\label{tab:perceptual}
\centering\footnotesize\setlength{\tabcolsep}{2.6pt}
\begin{tabular}{l cc cc cc}
\toprule
& \multicolumn{2}{c}{LPIPS $\downarrow$} & \multicolumn{2}{c}{CR80 thin $\uparrow$} & \multicolumn{2}{c}{CR80 all $\uparrow$} \\
\cmidrule(lr){2-3}\cmidrule(lr){4-5}\cmidrule(lr){6-7}
Allocator & 10K & 50K & 10K & 50K & 10K & 50K \\
\midrule
Uniform voxels & 0.021 & 0.013 & 0.445 & 0.771 & 0.550 & 0.841 \\
TF density & 0.017 & 0.011 & 0.877 & 0.978 & 0.936 & 0.982 \\
Anatomy heuristic & 0.016 & 0.010 & 0.877 & 0.981 & 0.933 & 0.984 \\
Dense, top-$B$ signif. & 0.015 & 0.009 & 0.720 & 0.883 & 0.804 & 0.934 \\
\midrule
\vacf\ (ours) & 0.013 & 0.009 & 0.952 & 0.985 & 0.952 & 0.985 \\
Neural prior (ours) & \textbf{0.011} & \textbf{0.008} & \textbf{0.963} & \textbf{0.988} & \textbf{0.968} & \textbf{0.988} \\
\midrule
Oracle density$^\dagger$ & 0.011 & 0.008 & 0.959 & 0.992 & 0.967 & 0.990 \\
\bottomrule
\end{tabular}
\end{table}

\begin{figure*}[!tp]\centering
\includegraphics[width=\textwidth]{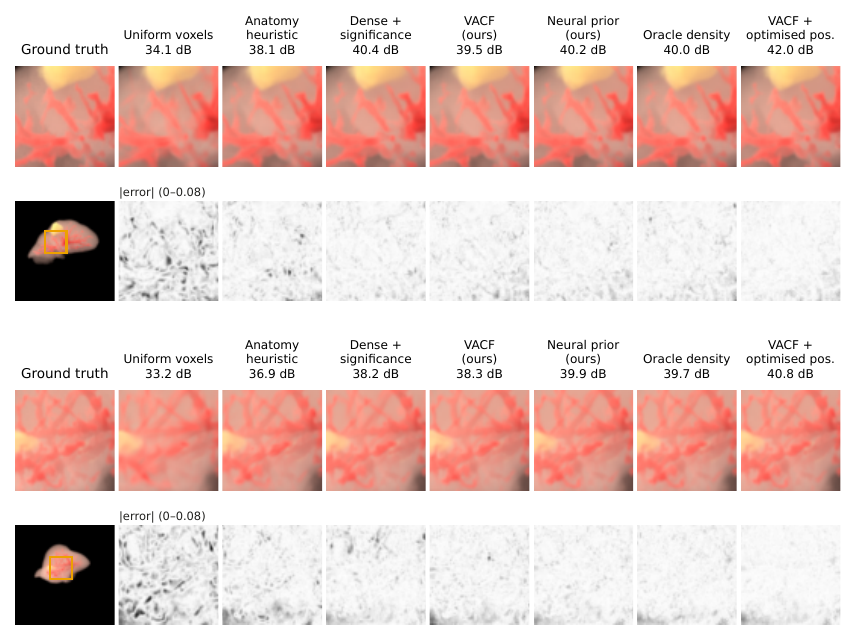}
\caption{Thin-vessel regions of two MSD test patients rendered with 10K Gaussians (preset A). First column: ground-truth crop (top) and the full view with the crop location (bottom). Other columns: renderings after allocation and attribute fitting (top; crop PSNR in dB) and the absolute error averaged over colour channels (bottom; white 0 to black 0.08). Uniform allocation blurs branches and leaves errors along every vessel edge; differences among the anatomy-aware allocators are subtle in single crops (patient-level statistics in Tables~\ref{tab:main}--\ref{tab:paired}). The last column shows \vacf\ followed by joint optimisation of positions. Crops are chosen automatically as the $112\times112$-pixel window with the most thin-vessel pixels in the view.}\label{fig:qual}
\end{figure*}

\paragraph{The remaining gap is placement within cells.} Per-case 3DGS at the same budget, which optimises positions jointly with all attributes, is above the oracle density by 0.21--0.39\,dB on vessels and by up to 2.06\,dB on thin vessels at 10K. Since the oracle density is sampled and refitted with the same frozen-position protocol as every allocator, this gap reflects sub-cell placement, which a density over 2\,mm cells cannot express; Section~\ref{sec:res_positions} shows that optimising positions after allocation closes it.

\subsection{External patients}\label{sec:res_external}
On the 19 IRCADb patients (Table~\ref{tab:external}, Fig.~\ref{fig:budget}d--f), acquired at another centre and annotated differently, the conclusions hold, with larger margins over the heuristic. The neural prior is within $-0.18$ to $+0.07$\,dB of the oracle on vessels and $-0.25$ to $-0.04$\,dB on thin vessels (n.s.), and $-0.15$\,dB on the foreground at 10K (Holm-adjusted $p=0.04$; n.s.\ at 25K and 50K); it exceeds the anatomy heuristic by $+1.27$ $[0.99,1.58]$\,dB on vessels and $+3.59$ $[2.91,4.30]$\,dB on thin vessels at 10K. \vacf\ improves over the heuristic by $+0.79$ $[0.50,1.11]$, $+0.17$ and $+0.29$\,dB on vessels, $+3.17$ $[2.52,3.84]$, $+1.76$ and $+1.51$\,dB on thin vessels and $+1.54$, $+0.69$ and $+0.35$\,dB on the foreground (all $p<0.01$), and is 0.46--0.66\,dB below the oracle on vessels. Uniform voxels are 7.3\,dB below the oracle on vessels at 10K. Pruning the dense model is relatively stronger on IRCADb than on MSD, but \vacf\ still exceeds it on thin vessels by 1.7--2.2\,dB (significant at 25K and 50K) and at 50K on all three metrics.

\begin{table*}[!tp]
\caption{External test on 3D-IRCADb-01 ($n=19$ patients; different centre, scanners and annotation protocol; never used for any decision). All allocators were fitted or trained on MSD training patients and applied unchanged. PSNR in dB; CR80 as in Table~\ref{tab:perceptual}. Bold: best value excluding the reference; $^\dagger$reference (per-patient 3DGS optimisation).}\label{tab:external}
\centering\footnotesize\setlength{\tabcolsep}{4pt}
\begin{tabular}{l ccc ccc ccc c}
\toprule
& \multicolumn{3}{c}{Vessels} & \multicolumn{3}{c}{Thin vessels ($r<1.5$\,mm)} & \multicolumn{3}{c}{Foreground} & CR80 thin \\
\cmidrule(lr){2-4}\cmidrule(lr){5-7}\cmidrule(lr){8-10}\cmidrule(lr){11-11}
Allocator & 10K & 25K & 50K & 10K & 25K & 50K & 10K & 25K & 50K & 10K \\
\midrule
Uniform voxels & 32.67 & 35.63 & 37.02 & 29.69 & 32.40 & 34.38 & 34.49 & 36.85 & 37.93 & 0.420 \\
TF density & 38.88 & 40.59 & 41.15 & 35.31 & 38.25 & 39.83 & 36.75 & 39.13 & 40.10 & 0.856 \\
Anatomy heuristic & 38.52 & 40.83 & 41.09 & 35.11 & 38.62 & 39.41 & 37.31 & 39.68 & 40.30 & 0.836 \\
Dense model, top-$B$ significance & 38.20 & 40.46 & 40.42 & 36.54 & 38.64 & 38.70 & 38.99 & 40.20 & 40.15 & 0.878 \\
\midrule
\vacf\ (ours) & 39.31 & 41.00 & 41.38 & 38.28 & 40.38 & 40.91 & 38.85 & 40.36 & 40.65 & 0.944 \\
Neural prior (ours) & \textbf{39.79} & \textbf{41.53} & \textbf{41.93} & \textbf{38.69} & \textbf{40.90} & \textbf{41.46} & \textbf{39.54} & \textbf{40.66} & \textbf{40.85} & \textbf{0.948} \\
\midrule
Oracle density$^\dagger$ & 39.97 & 41.46 & 41.92 & 38.82 & 40.95 & 41.71 & 39.70 & 40.63 & 40.90 & 0.958 \\
\bottomrule
\end{tabular}
\end{table*}

\subsection{Allocation as initialisation for position optimisation}\label{sec:res_positions}
In deployment the allocated positions need not stay fixed. We therefore asked whether allocation still matters when positions are optimised together with the other attributes, and how the result compares with per-case 3DGS, which grows its Gaussians from scratch by densification (Table~\ref{tab:positions}, Fig.~\ref{fig:positions}). This experiment was run for \vacf, the oracle and two baselines. Optimising positions (15K iterations, no densification) improves every allocation, \vacf\ by $+1.93$\,dB on vessels and $+3.18$\,dB on thin vessels at 10K. Allocation still matters: \vacf\ with optimised positions stays $2.43$ $[2.18,2.67]$\,dB above uniform voxels with optimised positions on vessels at 10K and $0.56$\,dB at 50K. Among anatomy-aware initialisations the differences shrink: \vacf\ and the oracle are within 0.15\,dB of each other, and the anatomy heuristic is 0.19\,dB higher than \vacf\ on vessels but 0.30\,dB lower on thin vessels and 0.27\,dB lower on the foreground at 10K (all $p<0.01$); at 50K none of these differences is significant. \vacf\ with optimised positions exceeds per-case 3DGS at the same budget on vessels ($+1.12$ $[0.97,1.26]$\,dB) and thin vessels ($+0.81$\,dB) at 10K, is level at 50K, and is 0.15--0.22\,dB lower on the foreground. It is also faster: at 25K Gaussians on 10 test patients, \vacf\ with optimised positions reaches 41.9\,dB on vessels in 118\,s, whereas per-case 3DGS reaches 41.3\,dB in 248\,s and attains the former value within its 30K iterations in only one of the ten patients (Fig.~\ref{fig:positions}a).

\begin{table*}[!tp]
\caption{Frozen versus optimised positions on the MSD test set ($n=20$). Each entry is PSNR (dB) with positions frozen / optimised jointly with the other attributes (15K iterations, no densification). Per-case 3DGS grows its Gaussians from scratch by budget-capped densification (30K iterations). Frozen-position values come from a repeat run of the protocol and differ from Table~\ref{tab:main} by at most 0.02\,dB in the mean. Time at 25K Gaussians (10 patients, RTX 4090, path tracing excluded): \vacf\ with optimised positions 118\,s (15K iterations), per-case 3DGS 248\,s (30K iterations).}\label{tab:positions}
\centering\footnotesize\setlength{\tabcolsep}{4pt}
\begin{tabular}{l cc cc cc}
\toprule
& \multicolumn{2}{c}{Vessels} & \multicolumn{2}{c}{Thin vessels} & \multicolumn{2}{c}{Foreground} \\
\cmidrule(lr){2-3}\cmidrule(lr){4-5}\cmidrule(lr){6-7}
Initial allocation & 10K & 50K & 10K & 50K & 10K & 50K \\
\midrule
Uniform voxels & 33.07 / 39.02 & 36.97 / 41.79 & 31.79 / 39.48 & 34.62 / 41.91 & 35.90 / 39.95 & 38.15 / 40.97 \\
Anatomy heuristic & 39.49 / 41.65 & 41.78 / 42.29 & 37.10 / 41.98 & 41.35 / 43.10 & 38.22 / 40.63 & 40.71 / 41.25 \\
\vacf & 39.53 / 41.46 & 41.84 / 42.34 & 39.10 / 42.28 & 42.00 / 43.07 & 39.42 / 40.90 & 40.93 / 41.29 \\
Oracle$^\dagger$ & 40.14 / 41.58 & 41.89 / 42.30 & 39.41 / 42.38 & 42.56 / 43.26 & 40.14 / 41.05 & 41.10 / 41.21 \\
\midrule
Per-case 3DGS$^\dagger$ & 40.33 & 42.22 & 41.47 & 43.21 & 41.05 & 41.51 \\
\bottomrule
\end{tabular}
\end{table*}

\begin{figure*}[!tp]\centering
\includegraphics[width=\textwidth]{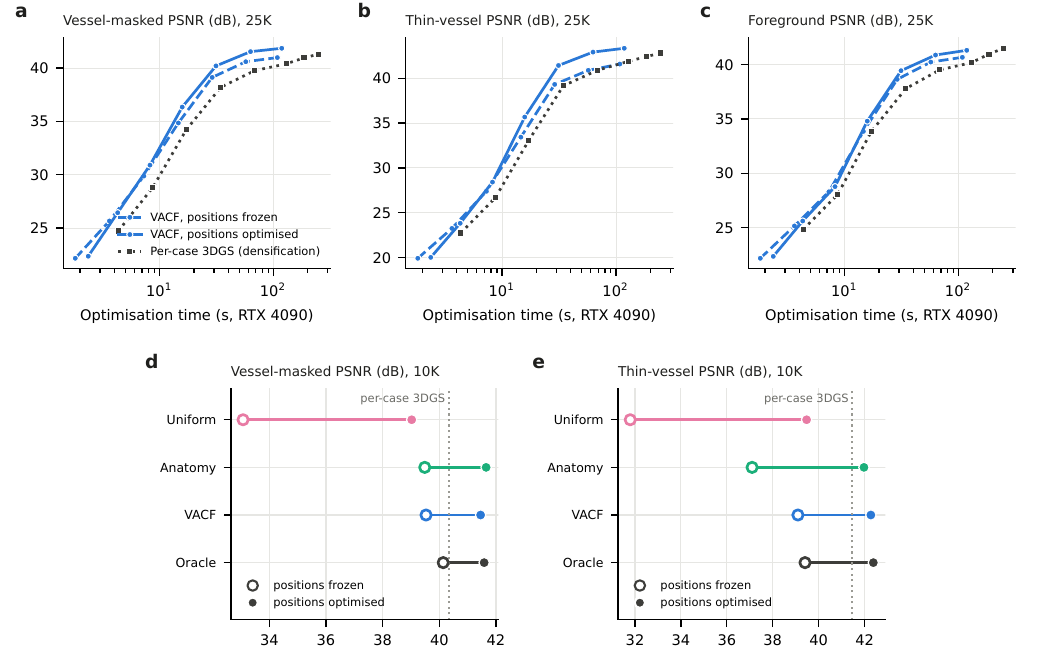}
\caption{Allocation as initialisation for position optimisation. (a--c) Test-view PSNR against optimisation time at 25K Gaussians (means over 10 MSD test patients; evaluation time excluded): \vacf\ allocation with positions frozen (the evaluation protocol) or optimised jointly with the other attributes (15K iterations), and per-case 3DGS grown by budget-capped densification (30K iterations). (d, e) Vessel-masked and thin-vessel PSNR at 10K for four initial allocations with positions frozen (open circles) and optimised (filled circles); dotted line: per-case 3DGS. Position optimisation helps every allocation, but uniform initialisation remains far behind, and \vacf\ with optimised positions exceeds per-case 3DGS on vessels.}\label{fig:positions}
\end{figure*}

\subsection{Rendering presets: strengths and limits of each implementation}\label{sec:res_presets}
Table~\ref{tab:presets} and Fig.~\ref{fig:presets} compare the allocators on the same 10 test patients under the three presets; all allocators were fitted or trained on preset A.

\begin{table}[!tbp]
\caption{Rendering presets, the same 10 MSD test patients. Preset B changes colours, parenchyma density and lighting while keeping a label-aware TF; preset C is a label-free TF that classifies intensities relative to the liver median, so appearance no longer follows the labels. Allocators were fitted or trained on preset A; their inputs are recomputed under each preset, and the oracle uses teachers of that preset. PSNR in dB.}\label{tab:presets}
\centering\footnotesize\setlength{\tabcolsep}{2.4pt}
\begin{tabular}{l cc cc cc}
\toprule
& \multicolumn{2}{c}{Vessels} & \multicolumn{2}{c}{Thin} & \multicolumn{2}{c}{Foreground} \\
\cmidrule(lr){2-3}\cmidrule(lr){4-5}\cmidrule(lr){6-7}
Allocator & 10K & 50K & 10K & 50K & 10K & 50K \\
\midrule
\multicolumn{7}{l}{\emph{Preset A (fitting preset)}} \\
Anatomy heuristic & 39.45 & 41.47 & 37.38 & 41.55 & 38.31 & 40.77 \\
\vacf & 39.31 & 41.47 & 39.09 & 42.12 & 39.42 & 40.97 \\
Neural prior (ours) & \textbf{39.98} & \textbf{41.71} & \textbf{39.54} & \textbf{42.49} & \textbf{40.16} & \textbf{41.10} \\
Neural prior w/o TF inputs & 39.78 & 41.69 & 39.12 & 42.47 & 40.09 & 41.05 \\
Oracle density$^\dagger$ & 39.84 & 41.56 & 39.33 & 42.72 & 40.15 & 41.10 \\
\midrule
\multicolumn{7}{l}{\emph{Preset B: unseen colours and lighting}} \\
Anatomy heuristic & 39.05 & 41.43 & 35.98 & 41.45 & 37.52 & 39.98 \\
\vacf & 40.04 & 41.67 & \textbf{39.50} & 42.88 & 38.91 & 40.30 \\
Neural prior (ours) & 39.15 & 41.62 & 36.75 & 42.82 & 39.23 & 40.43 \\
Neural prior w/o TF inputs & \textbf{40.24} & \textbf{41.70} & 39.38 & \textbf{43.53} & \textbf{39.60} & \textbf{40.54} \\
Oracle density$^\dagger$ & 40.09 & 41.67 & 39.61 & 43.76 & 39.62 & 40.57 \\
\midrule
\multicolumn{7}{l}{\emph{Preset C: unseen label-free TF}} \\
Anatomy heuristic & 37.46 & 40.32 & 37.13 & 40.49 & 34.73 & 39.36 \\
\vacf & 37.61 & 40.44 & 37.43 & 41.11 & 35.22 & 39.52 \\
Neural prior (ours) & \textbf{38.61} & \textbf{41.04} & \textbf{39.85} & \textbf{43.07} & \textbf{38.66} & \textbf{41.10} \\
Neural prior w/o TF inputs & 38.51 & 40.92 & 39.26 & 42.39 & 37.00 & 40.20 \\
Oracle density$^\dagger$ & 38.53 & 41.20 & 39.70 & 42.99 & 39.06 & 41.30 \\
\bottomrule
\end{tabular}
\end{table}

\begin{figure*}[!tp]\centering
\includegraphics[width=\textwidth]{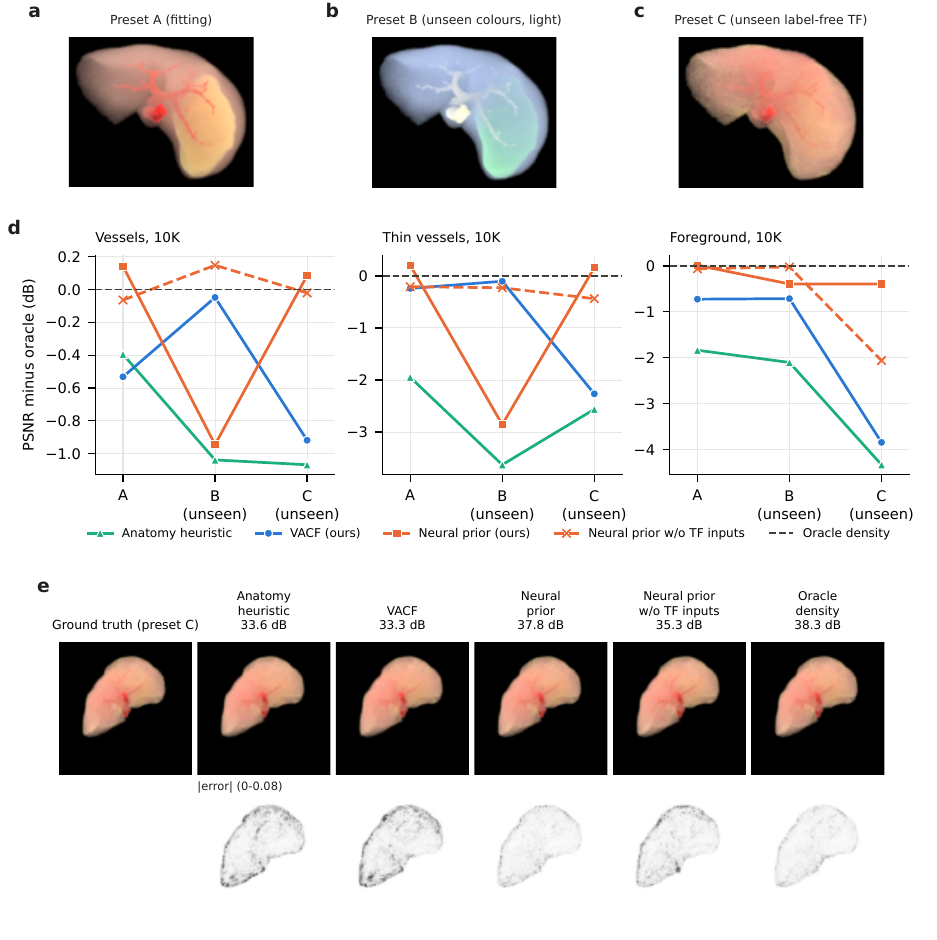}
\caption{Rendering presets (10 MSD test patients; all allocators fitted or trained on preset A). (a--c) One test patient under preset A, preset B (unseen colours, extinctions and lighting) and preset C (unseen label-free TF). (d) PSNR relative to the oracle of each preset at 10K Gaussians on vessels, thin vessels and the foreground. \vacf\ stays at the oracle level under preset B but not under preset C, where appearance no longer follows the labels; the neural prior with TF inputs shows the opposite pattern; without TF inputs it behaves like \vacf. (e) Preset-C renderings of one view at 10K Gaussians (foreground PSNR) and absolute errors: label-driven allocation (anatomy heuristic, \vacf) leaves large errors along the liver surface, where the preset-C oracle spends almost a third of its budget.}\label{fig:presets}
\end{figure*}

\paragraph{Unseen colours, extinctions and lighting (preset B).} \vacf\ remains at the oracle level on vessels and thin vessels ($-0.05$ $[-0.33,0.23]$ and $-0.10$ $[-0.53,0.32]$\,dB at 10K; n.s.\ at all budgets) and is 0.2--0.7\,dB below it on the foreground, and it beats the anatomy heuristic by $+0.99$, $+0.52$ and $+0.24$\,dB on vessels and by $+3.52$, $+2.67$ and $+1.43$\,dB on thin vessels. The neural prior does not transfer: it is $2.85$ $[2.22,3.53]$\,dB below the oracle on thin vessels at 10K, $1.56$ at 25K and $0.94$ at 50K (all Holm-adjusted $p=0.02$). Retrained without the TF inputs, it again does not differ significantly from the oracle on any of the three metrics (thin vessels $-0.23$ $[-0.75,0.28]$\,dB at 10K).

\paragraph{Unseen label-free TF (preset C).} Here the pattern reverses. \vacf\ and the anatomy heuristic fall well below the oracle, by $3.84$ $[3.47,4.21]$\,dB on the foreground and $2.26$ $[1.39,3.20]$\,dB on thin vessels at 10K for \vacf\ (significant at every budget), and \vacf\ is no better than the heuristic. A label-free variant of \vacf\ in which the tissue fractions are replaced by the TF extinction and gradient is better on the foreground ($+0.73$\,dB at 10K) but worse on vessels ($-1.06$\,dB), and remains far from the oracle. The neural prior with TF inputs, in contrast, is at the oracle level on vessels and thin vessels ($+0.09$ and $+0.16$\,dB at 10K; n.s.) and 0.20--0.40\,dB below it on the foreground, whereas without its TF inputs it loses 1.1--2.1\,dB on the foreground.

\paragraph{Why.} Under preset C, appearance no longer follows the labels: on these 10 patients, 34\% of the voxels rendered as dense vessel (extinction $>0.2\,{\rm mm^{-1}}$) carry no vessel label, 25\% of labelled vessel voxels are not rendered as dense, the share of the total extinction that lies in labelled vessels drops from 32\% under preset A to 18\%, and tumours, which are hypodense, become faint instead of dense. The oracle follows the rendering: under preset C it spends 31\% of its budget in the 2\,mm shell below the liver surface (18\% under preset A), 5\% on labelled vessels (13\%) and 4\% on tumour (12\%). \vacf, whose weights encode the relation between labels and appearance of preset A, still spends 22\% on tumour and 13\% on labelled vessels, and only 9\% on the surface shell; its foreground errors concentrate along the liver surface (Fig.~\ref{fig:presets}e). The neural prior can follow the extinction instead, but it relies on the extinction contrast it saw in training: preset C keeps a vessel-to-parenchyma extinction contrast similar to preset A, whereas preset B lowers it from about 37:1 to about 14:1, and this is where the prior with TF inputs under-serves thin vessels.

\subsection{What drives allocation}\label{sec:res_ablation}
\paragraph{What the oracle knows.} On the test set the oracle density beats uniform voxels by 7.1\,dB of vessel-masked PSNR at 10K, and the anatomy heuristic by 0.66\,dB on vessels, 2.3\,dB on thin vessels and 1.9\,dB on the foreground. To understand this gap we compared the oracle with the heuristic region by region. On training patients, the oracle spends 2.3 times more of its budget on a 2\,mm shell below the liver surface, 3.3 times more on thin vessels and 1.5 times more on tumour, and half as much on the parenchyma surrounding vessels; the same pattern holds on the test patients (Fig.~\ref{fig:oracle}a). A gradient-boosted regression of the cell-wise log-ratio between oracle and heuristic densities on interpretable features explains 42\% of its variance on held-out patients, with depth below the surface as the dominant variable (permutation importance 0.69, next 0.09): the oracle favours cells near the surface and starves deep tissue (Fig.~\ref{fig:oracle}b; densities of one patient in Fig.~\ref{fig:densities}). In a translucent volume, deep cells are seen through attenuating tissue and contribute less to every image, which is the visibility term of \eqref{eq:vis}. The neural prior removes this depth bias and \vacf\ halves it (Fig.~\ref{fig:oracle}b).

\begin{figure}[!tbp]\centering
\includegraphics[width=\columnwidth]{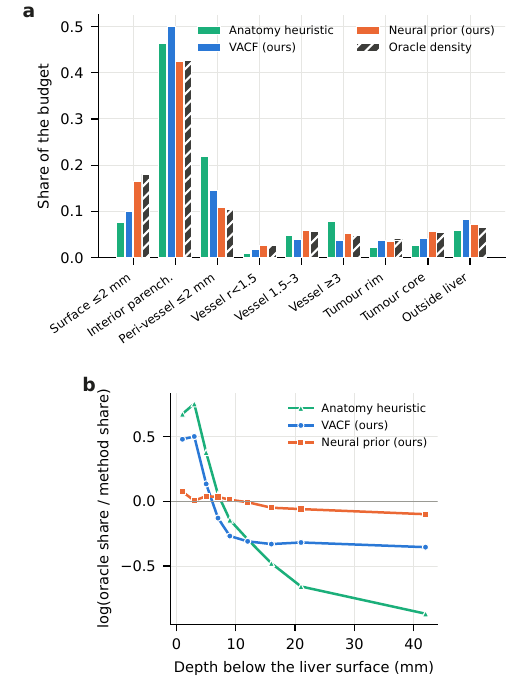}
\caption{How the per-patient oracle spends its budget (20 MSD test patients). (a) Share of the budget per region for the anatomy heuristic, \vacf, the neural prior and the oracle (hatched). Regions are assigned per 2\,mm cell: a 2\,mm shell below the liver surface, interior parenchyma, parenchyma within 2\,mm of a vessel, vessels by radius, tumour rim and core, and cells outside the liver. Relative to the heuristic, the oracle spends more on the surface shell, thin vessels and tumour and much less around vessels. (b) Log-ratio between the oracle's and each method's budget share as a function of depth below the liver surface. The heuristic over-spends deep inside the organ and under-spends near the surface; \vacf\ halves this bias (from about $+0.7$ to $+0.5$ near the surface and from $-0.85$ to $-0.35$ deep inside) and the neural prior removes it.}\label{fig:oracle}
\end{figure}

\begin{figure*}[!tp]\centering
\includegraphics[width=\textwidth]{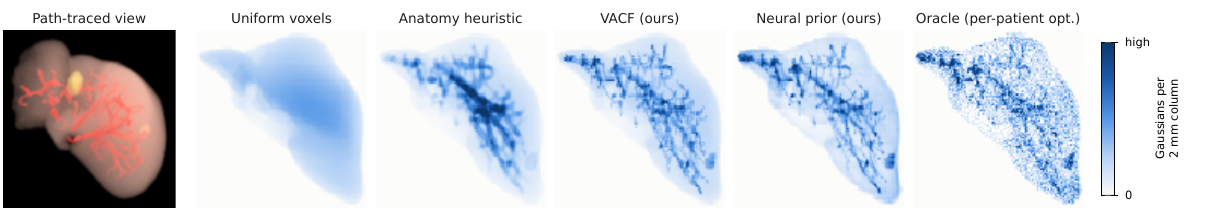}
\caption{Allocation densities of one patient (MSD test patient 372). Left: a path-traced view of the patient (display copy rendered at $1024\times1024$). Right: allocation densities shown as coronal projections of the number of Gaussians per 2\,mm column, on a common colour scale. Uniform voxel allocation spreads the budget over the organ; the anatomy heuristic concentrates it on large vessels and blurs it into their surroundings. The per-patient oracle follows the vessel tree down to small branches while keeping a broad coverage of the parenchyma. \vacf\ and the neural prior reproduce this pattern without per-patient optimisation.}\label{fig:densities}
\end{figure*}

\paragraph{Fitted closed form.} On the 56 training patients the fit gives $\gamma=2.80$, above the range of 1--2 suggested by the squared-error argument alone: optimised scenes favour visible cells even more strongly than attenuation of the error explains, which is why we treat $\gamma$ as an empirical parameter. The simplex weights are $w_{\rm ves}=(0.311,0.321,0.349)$ for the three calibre classes, $w_{\rm tum}=0.019$, $w_{\rm par}=7.7\times10^{-5}$ and $w_g=4.6\times10^{-5}$, with leave-one-patient-out coefficients of variation between 0.1\% and 3.8\%. The three vessel weights are nearly equal; because the vessel terms are volume fractions and the exponent is concave, a cell only partly occupied by a thin vessel still receives a large share relative to its vessel volume. Parenchyma receives Gaussians mainly through visibility and through partial vessel and tumour occupancy of boundary cells.

\paragraph{Terms of the closed form.} Table~\ref{tab:ablation}a refits variants of \vacf\ and evaluates them on the MSD test set. Removing visibility ($\gamma=0$) lowers the foreground PSNR by $0.76$ $[0.64,0.91]$\,dB at 10K and $0.28$\,dB at 50K but barely changes thin vessels ($-0.11$\,dB, n.s.): without visibility the fit moves weight to thin vessels (weight 0.76 of 1), so visibility mainly redistributes the parenchymal budget toward the visible surface. Pooling the three vessel calibres into one fraction makes no difference ($\le0.06$\,dB, n.s.), as the nearly equal weights anticipated. Replacing the model's exponent $1/(1+\beta)=0.51$ by a linear density (refitted $\gamma=1.76$) is slightly better at 10K ($+0.16$\,dB on vessels, $+0.42$\,dB on thin vessels, $p<0.05$) and equivalent at 50K: once the weights are refitted, the exponent is not decisive. Replacing the tissue fractions by the TF extinction loses 1.8\,dB on vessels and 3.8\,dB on thin vessels at 10K under preset A. What matters is that the weights are fitted to optimised allocations, that labels identify the vessels, and that visibility allocates the parenchyma.

\begin{table}[!tbp]
\caption{Ablations on the MSD test set ($n=20$), PSNR in dB. (a) Each \vacf\ variant is refitted on the training patients with the same objective. (b) Allocation cell size; both \vacf\ and the oracle are recomputed at each size. (c) The neural prior retrained without groups of input channels. Bold: best within each block (excluding references). Values come from a repeat run of the protocol and differ from Table~\ref{tab:main} by at most 0.02\,dB in the mean (Section~\ref{sec:impl}).}\label{tab:ablation}
\centering\footnotesize\setlength{\tabcolsep}{2.4pt}
\begin{tabular}{l cc cc cc}
\toprule
& \multicolumn{2}{c}{Vessels} & \multicolumn{2}{c}{Thin} & \multicolumn{2}{c}{Foreground} \\
\cmidrule(lr){2-3}\cmidrule(lr){4-5}\cmidrule(lr){6-7}
Variant & 10K & 50K & 10K & 50K & 10K & 50K \\
\midrule
\multicolumn{7}{l}{\emph{(a) Terms of the closed form}} \\
\vacf\ (full) & 39.53 & 41.84 & 39.10 & 42.00 & 39.42 & \textbf{40.93} \\
\quad w/o visibility ($\gamma=0$) & 39.45 & 41.58 & 38.99 & 41.59 & 38.66 & 40.65 \\
\quad w/o calibre split & 39.53 & 41.82 & 39.16 & 42.00 & 39.40 & 40.89 \\
\quad linear exponent ($1/(1+\beta)\to1$) & \textbf{39.69} & \textbf{41.90} & \textbf{39.52} & \textbf{42.06} & \textbf{39.57} & 40.92 \\
\quad labels $\to$ TF terms (label-free) & 37.69 & 40.77 & 35.33 & 39.41 & 38.21 & 40.21 \\
Anatomy heuristic & 39.49 & 41.78 & 37.10 & 41.35 & 38.22 & 40.71 \\
\midrule
\multicolumn{7}{l}{\emph{(b) Cell size (refitted per size)}} \\
\vacf, 1\,mm cells & \textbf{39.86} & \textbf{41.92} & \textbf{39.98} & \textbf{42.49} & \textbf{39.62} & \textbf{40.97} \\
\vacf, 2\,mm cells & 39.53 & 41.84 & 39.10 & 42.00 & 39.42 & 40.93 \\
\vacf, 4\,mm cells & 38.41 & 41.35 & 36.57 & 40.51 & 38.78 & 40.58 \\
Oracle$^\dagger$, 1\,mm cells & 40.66 & 41.88 & 40.74 & 42.84 & 40.46 & 41.09 \\
Oracle$^\dagger$, 2\,mm cells & 40.14 & 41.89 & 39.41 & 42.56 & 40.14 & 41.10 \\
Oracle$^\dagger$, 4\,mm cells & 38.97 & 41.67 & 36.92 & 41.42 & 39.49 & 40.94 \\
\midrule
\multicolumn{7}{l}{\emph{(c) Inputs of the neural prior}} \\
Neural prior (full) & 40.16 & 42.01 & 39.64 & 42.38 & \textbf{40.13} & 41.06 \\
\quad w/o tissue fractions & \textbf{40.17} & \textbf{42.02} & \textbf{39.65} & 42.44 & 40.09 & 41.03 \\
\quad w/o visibility, depth & 39.94 & 42.00 & 39.46 & \textbf{42.47} & 40.09 & \textbf{41.08} \\
\quad w/o TF inputs & 40.06 & 41.96 & 39.30 & 42.30 & 40.06 & 41.01 \\
\quad CT and depth only & 38.21 & 41.29 & 34.69 & 39.38 & 38.74 & 40.47 \\
\bottomrule
\end{tabular}
\end{table}

\paragraph{Inputs of the neural prior.} Removing the explicit tissue fractions does not change the neural prior ($\le0.06$\,dB, n.s.), because with a label-aware TF the TF channels already encode the labels; removing visibility and depth lowers vessel PSNR by 0.22\,dB at 10K (Holm-adjusted $p=0.06$); removing the TF inputs lowers thin-vessel PSNR by 0.33\,dB at 10K ($p<0.01$) in distribution, the price of the robustness to preset B. A network that sees only CT and depth is much worse ($-4.9$\,dB on thin vessels at 10K): image intensity alone does not tell where the rendering needs Gaussians.

\paragraph{Cell size.} \vacf\ was refitted and the oracle recomputed at 1 and 4\,mm (Table~\ref{tab:ablation}b). Allocation on 4\,mm cells is worse than on 2\,mm cells for both (thin vessels $-2.5$\,dB at 10K), because a 4\,mm cell cannot separate a thin vessel from its surroundings. Refining to 1\,mm helps at small budgets: \vacf\ gains $+0.33$ $[0.21,0.44]$\,dB on vessels, $+0.88$ $[0.76,1.00]$\,dB on thin vessels and $+0.20$\,dB on the foreground at 10K (all $p<0.001$), and at 50K only on thin vessels ($+0.50$\,dB); the oracle gains slightly more ($+0.53$, $+1.33$ and $+0.32$\,dB at 10K), so that the gap between them is 0.8\,dB at 10K and vanishes at 50K. The refitted parameters are stable across cell sizes ($\gamma=2.87$ at 1\,mm against 2.80 at 2\,mm; vessel weights $0.43/0.30/0.25$, tumour $0.012$, parenchyma and gradient $<10^{-4}$), so the closed form does not depend on the discretisation, and the 2\,mm results are, if anything, conservative for small budgets.

\subsection{Supporting analysis: one density across budgets}\label{sec:res_theory}
The front end draws positions for every budget from one density per patient. This is supported if optimised allocations are approximately budget-invariant, as Proposition~\ref{prop:alloc} predicts.

\paragraph{Budget invariance.} We compared the normalised allocation densities of budget-capped teachers at 25K, 50K, 100K and 200K on training patients with the variability between two optimisation seeds at the same budget (Fig.~\ref{fig:theory}b). At 8\,mm resolution the Jensen--Shannon divergence between budgets is 0.069 (25K vs 50K), 0.050 (50K vs 100K) and 0.038 (100K vs 200K), against seed floors of 0.078 at 25K and 0.033 at 200K; only the widest pair (25K vs 200K, 0.087) slightly exceeds the floor of the smaller budget. At 4\,mm the same ordering holds (25K vs 200K 0.31 against a floor of 0.35). Over 25K--200K and at 4--8\,mm resolution, optimised allocations are therefore approximately budget-invariant. We did not test invariance at 2\,mm or below 25K; there the support is empirical: a single density per patient, with no budget-specific fitting, gives the results of Sections~\ref{sec:res_main}--\ref{sec:res_positions} at 10K--100K.

\begin{figure}[!tbp]\centering
\includegraphics[width=\columnwidth]{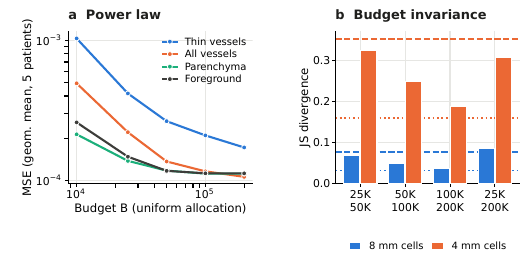}
\caption{Supporting analysis (training patients). (a) Under uniform allocation the MSE falls as a power of the budget, reaching a floor above about 50K Gaussians; fitting $D=AB^{-\beta}+C$ gives $\beta=0.96$ on the foreground, with steeper slopes on vessels and thin vessels than on parenchyma. Geometric means over five patients. (b) Budget invariance: Jensen--Shannon divergence between the normalised allocation densities of budget-capped per-patient models at different budgets (bars), compared with the divergence between two optimisation seeds at the same budget (dashed: 25K; dotted: 200K), at 8\,mm and 4\,mm resolution. Densities at different budgets differ about as much as two seeds do.}\label{fig:theory}
\end{figure}

\paragraph{Power law.} Under uniform allocation, \eqref{eq:distortion} predicts $D\propto B^{-\beta}$. Across five training patients and budgets of 10K--200K, the MSE follows a power law with a floor (Fig.~\ref{fig:theory}a); fitting $D=AB^{-\beta}+C$ gives $\beta=0.96\pm0.05$ on the foreground, which sets the exponent $1/(1+\beta)=0.51$ of \vacf. Slopes differ by region (vessels $1.33\pm0.27$, thin vessels $1.41\pm0.25$, parenchyma $0.90\pm0.08$, tumour $0.64\pm0.39$), so the common-exponent assumption holds only approximately; it implies a slow drift of the optimal allocation towards vessels as $B$ grows, which is consistent with the slight excess of the widest budget pair above the seed floor. The model thus motivates one density for a range of budgets, but it does not guarantee invariance outside the tested range and, as the exponent ablation shows, it does not prescribe the best exponent.

\section{Discussion}\label{sec:discussion}

\paragraph{What the front end solves.} Where the Gaussians of a cinematic CT scene should go under a fixed budget turns out to be predictable from the anatomy and the rendering setup. Learned once from optimised scenes, the front end places a budget of 10K--100K Gaussians as well as per-patient optimisation does (neural prior) or nearly so (\vacf), on held-out and external patients, and raises vessel and thin-vessel quality by several decibels over uniform placement and over pruning a dense voxel-aligned model. It removes the placement search from the per-patient pipeline and gives exact control over the count, which display hardware requires. As an initialisation, the closed-form allocation also makes the remaining per-patient optimisation more effective: starting from the right number of Gaussians in the right places, position optimisation surpasses per-patient densification on vessels at 10K Gaussians and, at 25K, reaches a higher vessel PSNR after 118\,s than densification after 248\,s of optimisation.

\paragraph{Two implementations, different strengths.} The neural prior is the more accurate allocator wherever its inputs look like its training data: it matches the per-patient oracle on the internal and external test sets and, on vessels, under the label-free preset, whose extinction contrast resembles preset A. It learns, however, whatever its TF inputs encode about the training preset, and it under-serves thin vessels when a new TF changes the extinction contrast (preset B). \vacf\ is within a fraction of a decibel of the oracle on vessels with seven readable parameters, which are stable across cell sizes. It transfers to new colours, extinctions and lighting because its tissue terms do not depend on them, and for the same reason it fails when appearance stops following the labels (preset C). Without its TF inputs, the neural prior behaves like \vacf. Each implementation is therefore only as good as the agreement between its inputs and what is rendered: for a viewer with a label-aware TF in which users change colours, \vacf\ or the neural prior without TF inputs is the safe choice; for an intensity-based TF, an appearance-aware allocator is required. That the closed form's failure can be stated in one sentence in advance is an argument for interpretable allocators in clinical software.

\paragraph{What remains unsolved.} The front end decides positions only. Complete scenes still require path-traced training views of each patient and per-patient fitting of colour, opacity and shape (about 2\,min per patient here, in addition to path tracing), so this work does not yet deliver scenes in one forward pass; a feed-forward attribute decoder in the spirit of Render-FM, fed with our positions, is the natural complement. Neither implementation is robust to every TF: training an appearance-aware prior across a distribution of TFs, rather than on one preset, is the obvious next step. Combining the front end with label-aware training as in LAGS, which we did not evaluate, may add further gains on structures. Finally, the study covers one organ and one task, and we measured image quality rather than clinical utility or on-device frame rates.

\paragraph{Limitations.} The oracle is defined by one 3DGS optimiser with generic densification; ground truth comes from one renderer and one camera distribution, and test views share the training views' sphere; the presets are three points in a large space of TFs and lights, and presets B and C were evaluated on 10 patients. Position optimisation was evaluated for \vacf\ and baselines but not for the neural prior, and the neural prior was trained on 2\,mm cells only, although 1\,mm cells improve both \vacf\ and the oracle at small budgets. The MSD test set was evaluated in two rounds and extended afterwards (Section~\ref{sec:selection}); the external patients and the unseen presets, which were never used for any decision, corroborate the conclusions. The budgets studied (10K--100K) overlap those considered for a standalone headset by \citet{lags} (50K--200K), but the relation between budget and frame rate depends on the device and renderer, and we did not measure it.

\section{Conclusion}
We separated the question of where a fixed budget of Gaussians should go from the per-patient optimisation that usually answers it, and showed that a label-aware allocation front end, learned once from optimised scenes, answers it for new patients without per-patient optimisation. A population-trained neural prior matches the allocations of per-patient optimisation on internal and external patients; an interpretable visibility-aware closed form comes within a fraction of a decibel on vessels in under a second; both raise anatomical rendering quality far above uniform placement and pruning at equal budgets, and, as an initialisation, the closed form gives position optimisation a better start than densification. Rendering presets delimit when each implementation can be trusted. Attribute fitting remains per patient; deciding where the Gaussians go no longer has to.

\section*{CRediT authorship contribution statement}
\textbf{Bowen Xiang:} Conceptualization, Methodology, Software, Validation, Formal analysis, Investigation, Data curation, Writing -- original draft, Writing -- review \& editing, Visualization, Project administration. \textbf{Dingjie Su:} Writing -- review \& editing, Visualization. \textbf{Jon S. Heiselman:} Writing -- review \& editing, Visualization. \textbf{Michael I. Miga:} Writing -- review \& editing, Visualization.

\section*{Declaration of competing interest}
The authors declare that they have no known competing financial interests or personal relationships that could have appeared to influence the work reported in this paper.

\section*{Funding}
This research did not receive any specific grant from funding agencies in the public, commercial, or not-for-profit sectors.

\section*{Declaration of generative AI and AI-assisted technologies in the manuscript preparation process}
During the preparation of this work the authors used Claude (Anthropic) to assist with analysis and figure code, drafting and editing of the manuscript text, and checking of references. After using this tool, the authors reviewed and edited the content as needed and take full responsibility for the content of the published article.

\section*{Data and code availability}
MSD Task08 is publicly available \citep{msd}. 3D-IRCADb-01 is available from IRCAD under a CC BY-NC-ND 4.0 licence \citep{ircadb}; in accordance with that licence we do not redistribute renderings or other data derived from it. Code, fitted parameters and the trained networks will be made publicly available upon publication.

\appendix
\section{Development notes}\label{app:notes}
Four pitfalls shaped the protocol and are reported for reproducibility. (i) An early go/stop experiment, run with 64 training views at 256\,spp and 7K-iteration fits, found the oracle within 0.1\,dB of the anatomy heuristic; noise-limited ground truth (about 39\,dB on the liver at 256\,spp) and view-limited overfitting (training-view PSNR rising while test PSNR fell as models grew) had compressed all differences. With 256 training views at 1024\,spp, a real oracle gap appeared. (ii) Per-patient teachers barely densify on soft translucent images with default thresholds, so their densities mostly reflected their initialisation until absolute-gradient densification at a lower threshold was used. (iii) Smoothing the teacher count maps to suppress sampling noise moved thin-vessel mass into neighbouring cells (thin-vessel share of the budget 2.4\%$\to$0.5\%), so allocators fitted to smoothed targets, including the first-round closed form evaluated on the test set, under-served thin vessels; the raw-count likelihood is unbiased and the model itself provides the smoothing. (iv) A mismatch between the cells over which the neural prior was trained and those over which it was normalised at inference leaked a fifth of its budget outside the organ (Section~\ref{sec:selection}); allocation densities should be audited by region before they are evaluated by image quality.

\bibliographystyle{elsarticle-harv}
\bibliography{refs}
\end{document}